\def\eqref#1{equation~\ref{#1}}
\def\1{\bm{1}}
\def\mA{{\bm{A}}}
\def\mW{{\bm{W}}}
\def\mX{{\bm{X}}}
\DeclareMathAlphabet{\mathsfit}{\encodingdefault}{\sfdefault}{m}{sl}
\SetMathAlphabet{\mathsfit}{bold}{\encodingdefault}{\sfdefault}{bx}{n}
\newcommand{\sigmoid}{\sigma}
\newcommand{\newmodel}{\text{SI-CIFG}}
\newcommand{\newsigmoid}{\text{SI-}\sigma}
\newcommand{\newtanh}{\text{SI-}\tanh}
\newcommand{\newattn}{\text{SI-Attn}}
\title{Efficient Language Model Architectures for Differentially Private Federated Learning}
\author{
  Jae Hun Ro\\
  \texttt{jaero@google.com}\\
  Google, Inc.
  \and
  Srinadh Bhojanapalli\\
  \texttt{bsrinadh@google.com}\\
  Google, Inc.
    \and
  Zheng Xu\\
  \texttt{xuzheng@google.com}\\
  Google, Inc.
    \and
  Yanxiang Zhang\\
  \texttt{zhangyx@google.com}\\
  Google, Inc.
    \and
  Ananda Theertha Suresh\\
  \texttt{theertha@google.com}\\
  Google, Inc.
}
\newcommand{\ignore}[1]{{}}
\newtheorem{theorem}{Theorem}
\newcommand{\conf}[1]{}
\theoremstyle{plain}
\newtheorem{proposition}[theorem]{Proposition}
\begin{document}

\maketitle

\begin{abstract}
Cross-device federated learning (FL) is a technique that trains a model on data distributed across typically millions of edge devices without data leaving the devices.
SGD is the standard client optimizer for on device training in cross-device FL, favored for its memory and computational efficiency.
However, in centralized training of neural language models, adaptive optimizers are preferred as they offer improved stability and performance.
In light of this, we ask if language models can be modified such that they can be efficiently trained with SGD client optimizers and answer this affirmatively.

We propose a scale-invariant \emph{Coupled Input Forget Gate} (SI CIFG) recurrent network by modifying the sigmoid and tanh activations in the recurrent cell 
and show that this new model converges faster and achieves better utility than the standard CIFG recurrent model in cross-device FL in large scale experiments.
We further show that the proposed scale invariant modification also helps in federated learning of larger transformer models.
Finally, we demonstrate the scale invariant modification is also compatible with other non-adaptive algorithms.
Particularly, our results suggest an improved privacy utility trade-off in federated learning with differential privacy.
\end{abstract}

\section{Introduction}
\label{introduction}

Federated learning (FL) is a technique that trains a model on data distributed across devices without data leaving the device \citep{Konecn2016FederatedLS,pmlr-v54-mcmahan17a}.
FL has been applied in a variety of diverse settings, including language-based applications \citep{Hard2018FederatedLF,chen-etal-2019-federated,49232,9084352,49696}.
Specifically, we examine cross-device FL~\citep{kairouz2021advances}, where local clients are edge devices with limited resources and computing power, which can number in the millions.
Previous works on language modeling in cross-device FL often use small recurrent-based models of less than $10$M parameters \citep{Hard2018FederatedLF,reddi2020adaptive,xu-etal-2023-federated}, while more recent works leverage a variety of efficient techniques for training larger Transformer-based models \citep{Hilmkil2021ScalingFL,ro-etal-2022-scaling}.
In this work, we investigate modular strategies applicable to various model architectures for improving training of both small and large models in cross-device FL.

Existing works on improving FL usually focus on developing better optimizers FedAvg, FedProx, Mime, FedDyn etc \citep{li2020federated,50448,NEURIPS2021_f0e6be4c,acar2021federated}.
While advanced optimizers are typically used in the server, (e.g., in the optimizer FedAdam, Adam optimizer is used in the server), in practice, the preferred client optimizer is often SGD for its memory efficiency. Note that using an adaptive optimizer like Adam in clients requires storing first and second moments of gradients, which improves the memory requirement considerably.
However, neural language models such as recurrent LSTMs \citep{yu2019review} or Transformers \citep{vaswani2017}, 
typically require more memory intensive adaptive optimizers, such as Adagrad or Adam that store both the first and second moment of gradients, and suffer in performance when trained with SGD~\citep{zhang2020adaptive}. 
Hence we ask the question: Can we achieve the best of both worlds and effectively train expressive architectures with memory efficient optimizers for language modeling in FL?

\citet{pmlr-v162-li22b} studied this question in the context of training centralized Transformer encoder models, such as BERT, and proposed using Scale Invariant Transformers for improved optimization of Transformers using SGD.
Using Scale Invariant Transformers, they were able to use SGD to obtain a similar performance to that of standard Transformers using the Adam optimizer.
Naturally, this raises the question if there exists scale invariant version of other neural architectures, e.g. LSTMs that can be optimized well with simple SGD.

Federated learning can also be combined with other privacy techniques to provide strong privacy protection to various threat models~\citep{zhang2023private,bonawitz2022federated}.
Differential privacy (DP)~\citep{dwork2006calibrating} is a statistical framework that provides rigorous guarantees for privacy protection and is adopted in federated learning to prevent models from memorizing individual information~\citep{mcmahan2017learning,ramaswamy2020training, el2022differential, wei2020federated, girgis2021shuffled}.
More recently, by applying the family of DP-Follow The Regularized Leader (DP-FTRL) algorithms~\citep{pmlr-v139-kairouz21b,choquette2023amplified} that have strong privacy-utility trade-offs without relying on sampling assumptions, meaningful formal differential privacy guarantees have been achieved for production language models in practical cross-device systems~\citep{xu-etal-2023-federated}.

\section{Our contributions}

\textbf{Improving LSTM architectures for FL.}
Long Short-Term Memory (LSTM)~\citep{hochreiter1997long} language models are often used in large scale FL studies due to their small size  \citep{pmlr-v54-mcmahan17a,47586,Hard2018FederatedLF,pmlr-v139-kairouz21b,xu-etal-2023-federated}. In particular, \cite{Hard2018FederatedLF} proposed to use Coupled Input Forget Gate (CIFG) LSTMs for federated learning for mobile keyboard predictions for its improved parameter and computational efficiency over the vanilla LSTM.
Motivated by this, we develop a novel scale-invariant CIFG model (\newmodel) with modified activation functions for FL.

\textbf{Application to FL.}
In cross-device FL, each client typically runs multiple steps of local SGD on their local data to produce model parameter updates.
These updates are then typically combined at the server with a federated optimizer such as FedAdam \citep{50448}.
This raises an important question: does our \newmodel~offer any advantages in this setting where one of the optimizers is SGD and the other is an adaptive optimizer like Adam?
We show that this is indeed the case and that both our proposed \newmodel~as well as the already existing scale-invariant Transformer \citep{pmlr-v162-li22b} (SI Transformer), using scale-invariant attentions, perform significantly better than their standard counterparts on a variety of experiments by improving convergence speeds in large scale FL experiments, while remaining robust to higher learning rates and heterogeneous networks.

\textbf{Training with differential privacy.}
FL models are trained with differential privacy using the DP-FTRL algorithm \citep{pmlr-v139-kairouz21b}.
In this scenario, while the local steps are still carried out via SGD, the model updates from clients are additionally clipped and aggregated with noise
at the server.
We show that scale invariant models also outperform their standard counterparts on experiments in a large-scale FL system with differential privacy.

\section{Scale Invariant Architectures}
\label{architectures}

\subsection{Previous scale invariant architectures}

In this section, we briefly review Scale Invariant Transformers~\citep{pmlr-v162-li22b}.
Recall that a function $f$ is scale invariant if $f(ax) = f(x)$ for any scalar $a > 0$. 
Let $n$ be the input sequence length and $d$ be the hidden dimension of the Transformer model.
Recall that for a given input $\mX \in \mathbb{R}^{d \times n}$, a Transformer computes self attention as follows:

\begin{equation}\label{eq:attn}
    \text{Attn}(\mX) =  \text{SoftMax} \left( (\mW_Q \mX)^\top . \mW_K \mX\right).
\end{equation}

Here $\mW_Q$ and $\mW_K$ are the Query and Key projections, respectively.
This operation is not scale invariant, as scaling the weights ($\mW_Q , \mW_K$) changes the output attention probabilities. \citet{pmlr-v162-li22b} proposed the following alternative attention computation:

\begin{equation}\label{eq:st_attn}
    \newattn(\mX) =  \text{N}\left(\text{ReLU} \left( (\mW_Q \mX)^\top . \mW_K \mX\right)\right).
\end{equation}

Here, N is the row-wise normalization operator - $\text{N}(\mA)_{ij} = \frac{\mA_{ij}}{\sum_j \mA_{ij}}$.
In particular, \citet{pmlr-v162-li22b} replaced the softmax in attention computation, with the ReLU activation followed by row-wise normalization.
This modifies the attention computation to be scale invariant.
They further modify the Transformer to be a Pre-LN activation model and use ReLU activation instead of GeLU in the feedforward layers.
We use the same architecture in our experiments.

However, \citet{pmlr-v162-li22b} tested their method only on centralized encoder models (BERT).
In this paper, we will extend the results to decoder-only Transformers trained using a language modeling objective in cross-device FL.

\subsection{New scale invariant architectures}
Inspired by the Scale Invariant Transformer, we now design a novel Scale Invariant version of the CIFG architecture we call \newmodel.
We note that the same changes from the Scale Invariant Transformer do not apply to the CIFG as due to architecture differences, scale sensitivity arises from different functions for CIFG models.

We focus on CIFG networks for their improved parameter and computational efficiency over the vanilla LSTMs.
The CIFG network uses a single gate to control self-connections in both input and recurrent cells, which reduces the number of parameters per cell by 25\% \citep{hochreiter1997,cho-etal-2014-learning,7508408}.
The shared gates increase efficiency, with little to no impact on quality, which is critical in the typically resource constrained edge device environment of cross-device FL.
Moreover, we expect that our proposed changes can also be directly applied to the LSTM model.

First, we review the basic CIFG before our proposed architecture changes.
Recall that for a given time step $t$ and input $x_t \in \mathbb{R}^d$, the CIFG forward pass can be written as follows:

\begin{align*}
& f_t = \sigmoid(W_f x_t + U_f h_{t-1} + b_f) & \textit{forget gate} \\
& i_t = 1 - f_t & \textit{coupled input forget gate} \\
& o_t = \sigmoid(W_o x_t + U_o h_{t-1} + b_o) & \textit{output gate} \\
& c_t = f_t \odot c_{t-1} + i_t \odot \tanh(W_c x_t + U_c h_{t-1} + b_c) & \textit{cell state} \\
& h_t = o_t \odot \tanh(c_t)
\end{align*}

where $d$ and $h$ are the input and hidden dimensions, respectively, and $W \in \mathbb{R}^{h \times d}$, $U \in \mathbb{R}^{h \times h}$, and $b \in \mathbb{R}^h$ are the cell's trainable weight and bias parameters. Here $\sigmoid$ and $\tanh$ are Sigmoid and Tanh activation functions, respectively.
This architecture is sensitive to input scale, mainly because of the non-linearities in the $\sigmoid$ and 
$\tanh$ activations.
We first propose modifying the activation functions to be scale invariant by replacing $\sigmoid$ with Relu and $\tanh$ with linear activation.
However, this no longer guarantees that intermediate outputs of different gates are normalized.
To further ensure that the intermediate features are normalized we propose using a Max-Normalization - \textsc{MaxN}, which normalizes each entry of the feature vector using its max absolute value along the hidden dimension.
Formally,
\begin{equation}
    \textsc{MaxN}(x)_i = \frac{x_i}{\max_{j \in [d]} |x_j|}.
\end{equation}

Based on this, we propose the following scale invariant replacement for $\sigmoid$ activation.
\begin{equation}\label{eq:si_sigmoid}
    \newsigmoid(x)_i = \textsc{MaxN}(\text{Relu}(x))_i = \frac{\text{Relu}(x)_i}{\max_{j \in [d]}(\text{Relu}(x)_j)}.
\end{equation}

Similarly, we also propose a scale invariant version of $\tanh$.

\begin{equation}\label{eq:si_tanh}
    \newtanh(x)_i = \textsc{MaxN}(x)_i = \frac{x_i}{\max_{j \in [d]}(|x_{j}|)}.
\end{equation}

It is straightforward to see that both $\newsigmoid$ and $\newtanh$ are scale invariant functions and we provide a short proof for completeness.

\begin{proposition}
Both $\newsigmoid$ and $\newtanh$ are scale invariant functions.
\end{proposition}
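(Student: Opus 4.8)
The plan is to verify the defining identity $f(ax) = f(x)$ for every scalar $a > 0$ directly for each of the two functions, using only the explicit formulas in \eqref{eq:si_sigmoid} and \eqref{eq:si_tanh} together with elementary positive-homogeneity facts. The key observation is that both $\text{Relu}$ and the absolute value are positively homogeneous of degree one: $\text{Relu}(ax) = a\,\text{Relu}(x)$ and $|ax| = a|x|$ for $a > 0$. Since $\textsc{MaxN}$ divides a vector entrywise by a max of such quantities, the scalar $a$ appears identically in numerator and denominator and cancels.

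First I would handle $\newtanh$. For $a > 0$ and any coordinate $i$, write $\newtanh(ax)_i = \frac{(ax)_i}{\max_{j\in[d]} |(ax)_j|} = \frac{a\,x_i}{\max_{j\in[d]} a|x_j|} = \frac{a\,x_i}{a\max_{j\in[d]} |x_j|} = \frac{x_i}{\max_{j\in[d]} |x_j|} = \newtanh(x)_i$, where the third equality pulls the positive constant $a$ out of the maximum. Since this holds coordinatewise, $\newtanh(ax) = \newtanh(x)$.

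Next I would do $\newsigmoid$ the same way, inserting one extra step for the $\text{Relu}$. For $a > 0$, $\text{Relu}(ax)_i = \max(ax_i, 0) = a\max(x_i,0) = a\,\text{Relu}(x)_i$, so the vector $\text{Relu}(ax)$ equals $a$ times $\text{Relu}(x)$; then applying the $\newtanh$-style cancellation to $\textsc{MaxN}$ applied to this vector gives $\newsigmoid(ax)_i = \frac{a\,\text{Relu}(x)_i}{\max_{j\in[d]} a\,\text{Relu}(x)_j} = \frac{\text{Relu}(x)_i}{\max_{j\in[d]} \text{Relu}(x)_j} = \newsigmoid(x)_i$.

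There is essentially no hard part; the only thing to be mildly careful about is the edge case where the denominator vanishes — for $\newtanh$ when $x = 0$, and for $\newsigmoid$ when $\text{Relu}(x) = 0$ (i.e.\ all coordinates of $x$ are $\le 0$). In those degenerate cases the function is either left undefined or defined by convention to be $0$, and in all of them $ax$ is degenerate exactly when $x$ is (since $a>0$), so the identity $f(ax)=f(x)$ holds trivially. I would note this parenthetically rather than belabor it, since the substantive content is the one-line cancellation argument above.
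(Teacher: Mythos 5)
Your proof is correct and follows essentially the same approach as the paper's: exploit the positive homogeneity of $\text{Relu}$ (and of the identity map), pull the scalar $a>0$ out of the maximum, and cancel. Your parenthetical handling of the vanishing-denominator edge case is a small point of extra care that the paper omits, but the substance of the argument is identical.
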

\begin{proof}
Let $a > 0$. Then 
for any $i \in d$, $\text{Relu}(a x)_i = a \text{Relu}(x)_i $ and hence,
\begin{align*}
\newsigmoid(a x)_i & = \frac{\text{Relu}(a x)_i}{\max_{j \in [d]}(\text{Relu}(a x)_j)} 
 = \frac{a \text{Relu}(x)_i}{a \max_{j \in [d]}(\text{Relu}(x)_j)} 
 = \frac{\text{Relu}(x)_i}{\max_{j \in [d]}(\text{Relu}(x)_j)} 
 = \newsigmoid(x)_i. 
\end{align*}
The calculations for $\newtanh$ are are similar and omitted.
\end{proof}

\section{Experiments with Federated learning}
\label{experiments}

We report results for experiments using scale invariant architectures in large scale FL in both simulation and live production experiments.
For simulations, we train a language model on the English Stack Overflow federated dataset, containing questions and answers from the forum grouped by username, provided from TensorFlow Federated (TFF) \citep{tff}.
For live production experiments, we train an English language model on millions of virtual keyboard user devices and follow the same settings and FL requirements for client participation as \citet{Hard2018FederatedLF}.
All experiments were implemented using the open-source FedJAX \citep{fedjax2021} and TFF libraries.

\subsection{Federated experiments on public datasets}

For experiments on the Stack Overflow federated dataset, we compare the following models: 

\begin{itemize}
  \item CIFG $19$M: Coupled Input Forget Gate variant of LSTM with $19$M trainable parameters with $1$ layer of size $2048$, embedding size $1024$, and tied input and output embeddings \citep{press-wolf-2017-using}.
  \item \newmodel~$19$M: Modified CIFG $19$M using $\newsigmoid$ and $\newtanh$ activations.
  \item Transformer $21$M: Transformer with $21$M trainable parameters with $6$ layers, $8$ attention heads, MLP size $2048$, embedding size $512$, and tied input and output embeddings.
  \item SI Transformer $21$M: Modified Transformer $21$M using \newattn.
\end{itemize}

We use WordPiece \citep{Wu2016GooglesNM} for subword tokenization with a vocabulary size of $4K$ to avoid potential bottlenecks in embeddings for larger vocabularies.
For FL training, we use FedAdam \citep{50448} which uses Adam \citep{Kingma2014AdamAM} for the server optimizer and SGD for the client optimizer with the same settings used by \citet{50448}, with the exception of learning rates.
We then sweep over learning rates for each model with $5$ different random seeds for client sampling with $500$ clients per round for $3K$ communication rounds and maximum sequence length of $20$.
Details on speicifc hyperparameter settings and sweeps can be seen in Appendix \ref{app:simulation}.

\begin{figure}[ht]
\centering
\subfigure{\includegraphics[width=0.45\columnwidth]{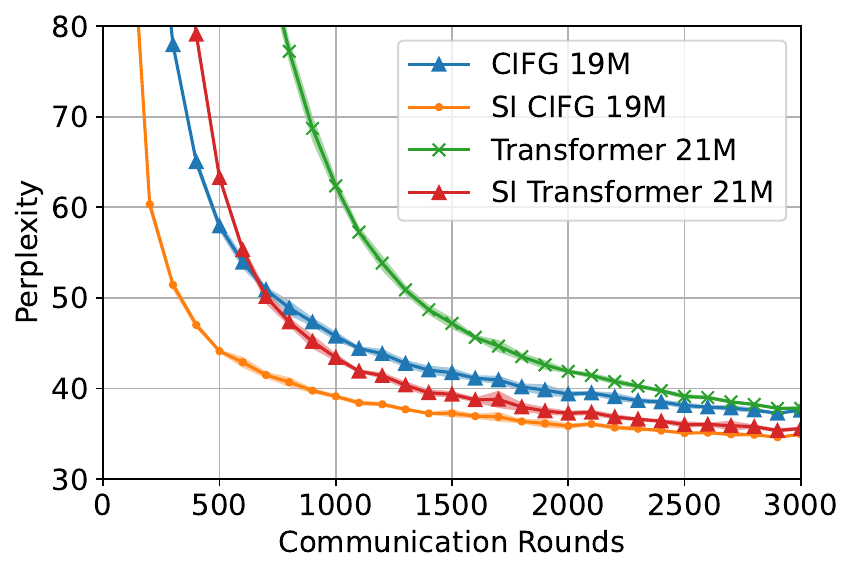}}
\subfigure{\includegraphics[width=0.45\columnwidth]{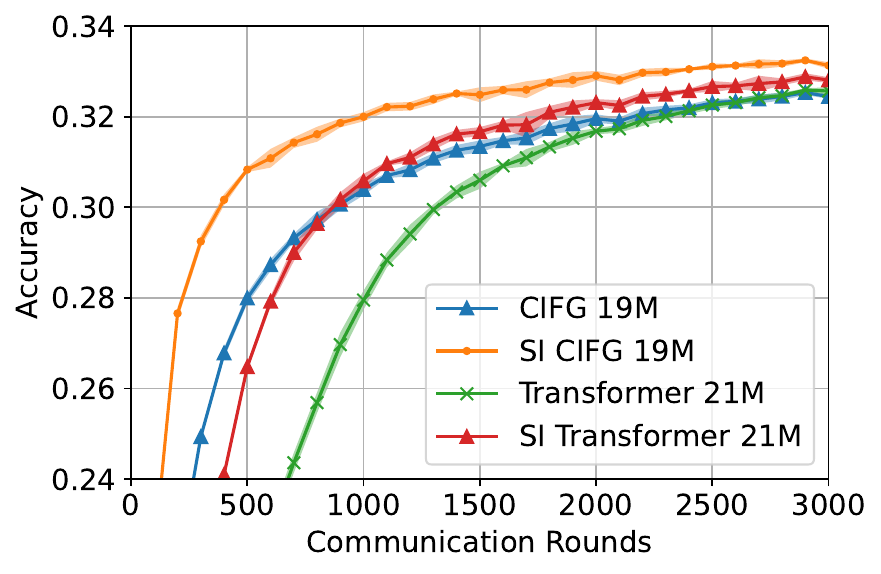}}
\caption{Perplexity and accuracy on the Stack Overflow test dataset with shading indicating standard deviation over $5$ random seeds.}
\label{fig:stackoverflow}
\end{figure}

We report perplexity and accuracy, discounting end-of-sequence
tokens, on the Stack Overflow test dataset over $3K$ communication rounds in Figure~\ref{fig:stackoverflow} with final values in Table~\ref{tab:stackoverflow} (Appendix~\ref{app:simulation}).
We observe that applying Scale Invariance significantly increases the rate of convergence for both the Transformer and CIFG, surpassing their respective base counterparts within $100$ communication rounds.
Our proposed \newmodel~yields the best final quality and has the fastest convergence speed by far.
We next continue to live production experiments, where the network of clients is much larger and more heterogeneous than simulation. 

\subsection{Live production experiments}

For live production experiments for cross-device FL on English virtual keyboard client devices, similar to \citet{Hard2018FederatedLF},
we compare the following models:

\begin{itemize}
  \item CIFG $9$M: CIFG with $9$M trainable parameters with $1$ layer of size $2048$, embedding size $512$, and tied input and output embeddings.
  \item \newmodel~$9$M: Modified CIFG $9$M using $\newsigmoid$ and $\newtanh$ activations.
  \item Transformer $11$M: Transformer with $11$M trainable parameters with $3$ layers, $8$ attention heads, MLP size $2048$, embedding size $512$, and tied input and output embeddings.
  \item SI Transformer $11$M: Modified Transformer $11$M using \newattn.
\end{itemize}

We use smaller sizes here compared to our previous simulation experiments due to stricter resource constraints on client devices \citep{Hard2018FederatedLF,Ro2021CommunicationEfficientAF}.
Additionally, we also apply stochastic $8$-bit uniform quantization \citep{NIPS2017_6c340f25,pmlr-v70-suresh17a} on the upload of model updates from client to server due to tighter communication bottlenecks on mobile devices.
We use Fast WordPiece \citep{song-etal-2021-fast} for subword tokenization with a vocabulary size of $4K$ as it has been shown to be faster than WordPiece, allowing for more steps of training within the maximum time limit allocated for client devices.
Again, we use the FedAdam algorithm with $500$ clients per round for $3K$ communication rounds with maximum sequence length of $20$.
For more details on hyperparameters, refer to Appendix \ref{app:live}.

\begin{figure}[ht]
\centering
\subfigure{\includegraphics[width=0.45\columnwidth]{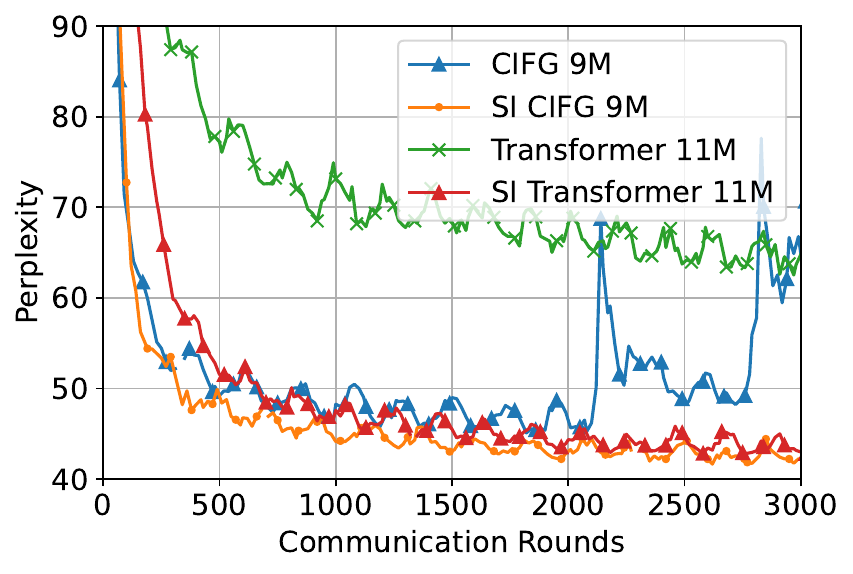}}
\subfigure{\includegraphics[width=0.45\columnwidth]{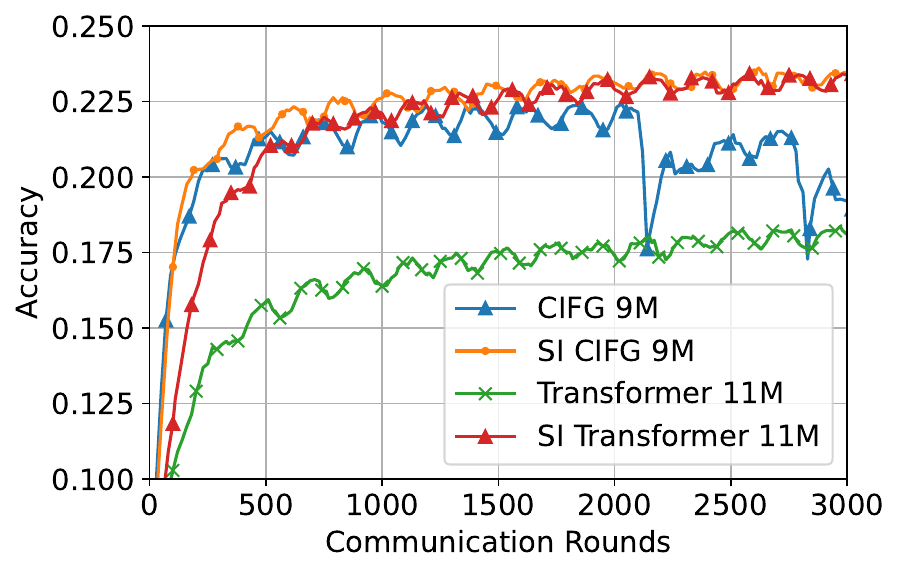}}
\caption{Perplexity and accuracy from live experiments on English virtual keyboard devices training from randomly initialized model weights.}
\label{fig:live-scratch}
\end{figure}

We report perplexity and accuracy for training the models from randomly initialized parameters on the population of English virtual keyboard client devices over $3K$ communication rounds in Figure~\ref{fig:live-scratch} with final values in Table~\ref{tab:live} (Appendix~\ref{app:live}).
The scale invariant architectures surpass their base counterparts within $100$ communication rounds and converge to significantly higher qualities.
While the base CIFG diverges in training at $2K$ rounds, which could be attributed to a number of potential issues \citep{pmlr-v28-pascanu13} when training recurrent models with SGD on client devices, our proposed \newmodel~trains smoothly, significantly outperforms the other models within $200$ rounds, and converges to the best final quality.
This improved training stability could be due to robustness to out-sized client updates in the $\newsigmoid$ and $\newtanh$ activations.

\section{Experiments with differentially private federated learning}

In this section, we apply our proposed scale invariant architectures to differentially private (DP) FL.
Specifically, we apply the DP variant of Follow-The-Regularized-Leader (DP-FTRL) \emph{Online TreeAgg} proposed by \citet{pmlr-v139-kairouz21b}.
For live production experiments, we train an English language model on millions of virtual keyboard user devices and mostly follow the same setup as \citet{xu-etal-2023-federated} for DP FL and compare the following models:

\begin{itemize}
  \item CIFG $6$M: CIFG with $6$M trainable parameters with $1$ layer of size $670$, embedding size $96$, and vocabulary size of ~$30K$.
  \item \newmodel~$6$M: Modified CIFG $6$M using $\newsigmoid$ and $\newtanh$ activations.
\end{itemize}

For training, we use $6500$ clients per round and the same noise multiplier of $7.0$ for $3K$ communication rounds with maximum sequence length of $10$ with word tokenization using a vocabulary size of ~$30K$.
The client optimizer is SGD with learning rate of $0.5$ and the server optimizer is SGD with momentum with learning rate $1.0$ and momentum $0.9$.
We set the noise multiplier in the DP-FTRL algorithm to obtain a z-CDP privacy of $1.05$.
We refer readers to \citet{bun2016concentrated} for the definition of z-CDP and \citet{pmlr-v139-kairouz21b} for the privacy guarantee calculations.
For more details and hyperparameter configurations, refer to Appendix \ref{app:live-dp}.
Before applying DP FL training, we first pre-train the models on the public English Colossal Clean Crawled Corpus (C4) \citep{2019t5} dataset for $370K$ steps and start DP FL training from the pre-trained checkpoint.
We report perplexity and in-vocab-accuracy, discounting out-of-vocabulary and end-of-sequence tokens, for DP FL training on the population of English virtual keyboard client devices over $3K$ communication rounds in Figure~\ref{fig:live-dp} with final values in Table~\ref{tab:live-dp} (Appendix~\ref{app:live-dp}).

\begin{figure}[ht]
\centering
\subfigure{\includegraphics[width=0.45\columnwidth]{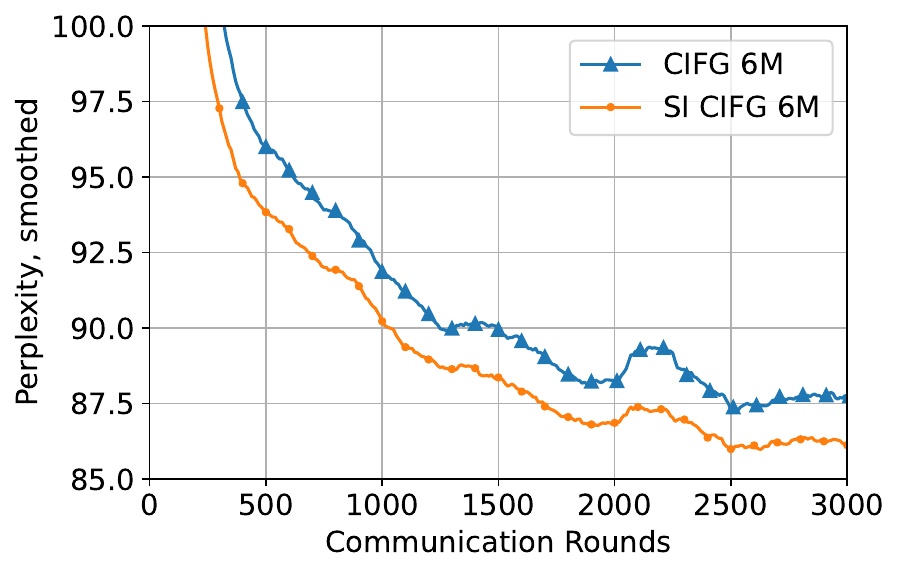}}
\subfigure{\includegraphics[width=0.45\columnwidth]{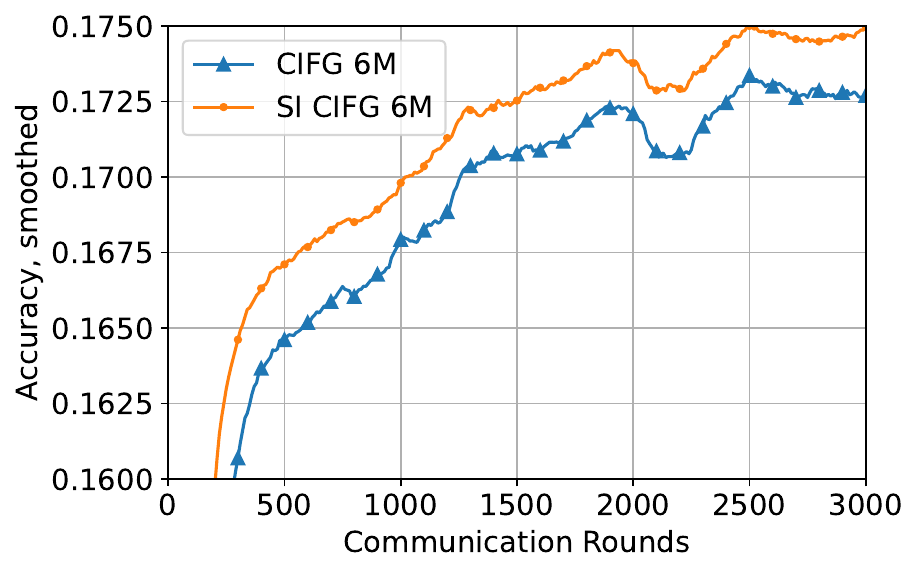}}
\caption{Smoothed perplexity and in-vocab-accuracy from DP live experiments on English virtual keyboard devices.}
\label{fig:live-dp}
\end{figure}

In the DP FL setting, our proposed \newmodel~consistently outperforms the base CIFG and under the same privacy budget, achieves better utility, measured by perplexity and accuracy.

\section{Conclusion}
\label{conclusion}

We applied scale invariance to a variety of neural architectures and proposed a novel CIFG-LSTM architecture (\newmodel) and evaluated their performance on a variety of cross-device and differentially private large scale FL experiments.
We demonstrated that using scale invariant architectures in federated language modeling can significantly accelerate and improve model convergence, with our proposed \newmodel~consistently achieving the best performance and convergence speed.
We hope that this study will motivate further studies into training larger models privately and effectively with federated learning.

\bibliography{refs}
\bibliographystyle{iclr2024_conference}
\appendix

\section{Federated experiments on public datasets details}\label{app:simulation}

For all models and experiments with the Stack Overflow federated dataset, we used the followed fixed hyperparameters

\begin{itemize}
  \item Number of clients per round = $500$: Number of clients sampled per communication round of FL training.
  \item Client batch size = $10$: Batch size used during local steps of training on client data.
  \item Number of client epochs = $1$: Number of epochs of training on client data.
  \item Number of client batches = $120$: Maximum number of client batches to train on until number of client epochs is reached.
  \item Maximum sequence length = $20$: Maximum allowed sequence length. Shorter sequences are padded and longer sequences are truncated to this.
  \item Client optimizer = SGD
  \item Server optimizer = Adam with $\beta_1$ at $0.9$, $\beta_2$ at $0.999$, and epsilon at $1e^{-8}$.
\end{itemize}

Table~\ref{tab:stackoverflow-hyper-sweep} details the hyperparameter configurations swept over per model, where the selected hyperparameters were chosen based on the lowest loss on the heldout split of the Stack Overflow federated dataset after $3K$ rounds of training averaged over 5 random seeds.
Table~\ref{tab:stackoverflow} reports the final evaluation results using these selected hyperparameters on the Stack Overflow test dataset.

\begin{table}[t]
\centering
\caption{Selected hyperparameters for each model.
The values in $[\ ]$ are the possible hyperparameter values searched over.}
\begin{tabular}{ccc}
& Client learning rate & Server learning rate \\
Model & $[0.1, 0.5, 1.0, 2.0]$ & $[0.001, 0.01]$ \\
\hline
CIFG $19$M & $0.1$ & $0.001$ \\
\newmodel~$19$M & $0.1$ & $0.001$ \\
Transformer $21$M & $0.5$ & $0.001$ \\
SI Transformer $21$M & $2.0$ & $0.01$ \\
\end{tabular}
\label{tab:stackoverflow-hyper-sweep}
\end{table}

\begin{table}[tt]
\centering
\caption{Perplexity and accuracy on the Stack Overflow test dataset after $3K$ communication rounds.}
\vspace{.1in}
\begin{tabular}{c c c}
\hline
Model & Perplexity & Accuracy\% \\
\hline
CIFG $19$M & $35.5\pm0.2$ & $33.1\pm0.1$ \\
\newmodel~$19$M & $\mathbf{33.6\pm0.2}$ & $\mathbf{33.6\pm0.1}$ \\
Transformer $21$M & $34.6\pm0.1$ & $33.4\pm0.0$ \\
SI Transformer $21$M & $33.7\pm0.1$ & $33.5\pm0.0$ \\
\hline
\end{tabular}
\label{tab:stackoverflow}
\end{table}

\section{Live production experiment details}\label{app:live}

For all models and experiments with the live English virtual keyboard user population, we used the followed fixed hyperparameters.
We note that due to the nature of live production experiments and longer feedback times, we were not able to run any extensive hyperparameter sweeps and re-used many common settings used in previous experiments.
Table~\ref{tab:live} reports the final evaluation results using these hyperparameters averaged over the final $100$ of $3K$ communication rounds to account for daytime variability \citep{eichner2019semi}.

\begin{itemize}
  \item Number of clients per round = $500$: Number of clients sampled per communication round of FL training.
  \item Client batch size = $10$: Batch size used during local steps of training on client data.
  \item Number of client epochs = $1$: Number of epochs of training on client data.
  \item Number of client batches = $120$: Maximum number of client batches to train on until number of client epochs is reached.
  \item Maximum sequence length = $20$: Maximum allowed sequence length.
  \item Client optimizer = SGD with learning rate $0.7$.
  \item Server optimizer = Adam with learning rate $0.02$, $\beta_1$ at $0.9$, $\beta_2$ at $0.999$, and epsilon at $1e^{-8}$.
\end{itemize}

\begin{table}[ht]
\centering
\caption{Perplexity and accuracy from live experiments on English virtual keyboard devices averaged with standard deviations over the final $100$ communication rounds. $^*$For base CIFG, we use the last $100$ rounds before divergence. }
\vspace{.1in}
\begin{tabular}{c c c}
\hline
Model & Perplexity & Accuracy\% \\
\hline
$^*$CIFG $9$M & $47.5\pm1.1$ & $21.8\pm0.2$ \\
\newmodel~$9$M & $\mathbf{42.2\pm0.2}$ & $\mathbf{23.4\pm0.1}$ \\
Transformer $11$M & $63.6\pm0.9$ & $18.2\pm0.1$ \\
SI Transformer $11$M & $44.3\pm0.7$ & $23.2\pm0.2$ \\
\hline
\end{tabular}
\label{tab:live}
\end{table}

\section{Experiments with differentially private federated learning details}\label{app:live-dp}

For all models and DP experiments with the live English virtual keyboard user population, we used the followed fixed hyperparameters.
Again, due to the nature of live production experiments and longer feedback times, we were not able to run any extensive hyperparameter sweeps and re-used many common settings used in previous experiments.
Table~\ref{tab:live-dp} reports the final evaluation results using these hyperparameters averaged over the final $100$ of $3K$ communication rounds to account for daytime variability.

\begin{itemize}
  \item Number of clients per round = $6500$: Number of clients sampled per communication round of FL training.
  \item Client batch size = $10$: Batch size used during local steps of training on client data.
  \item Clipping norm = $5.0$: Fixed L2 norm that client updates are clipped up to.
  \item Maximum sequence length = $10$: Maximum allowed sequence length. Decreased here since word tokenization is used instead of the typically longer subword tokenization.
  \item Client optimizer = SGD with learning rate $0.5$.
  \item Server optimizer = SGD with momentum with learning rate $1.0$ and momentum $0.9$.
\end{itemize}

\begin{table}[ht]
\centering
\caption{Perplexity and in-vocab-accuracy from DP live experiments on English virtual keyboard devices averaged with standard deviations over the final $100$ communication rounds.}
\vspace{.1in}
\begin{tabular}{c c c}
\hline
Model & Perplexity & Accuracy\% \\
\hline
CIFG $6$M & $88.0\pm0.8$ & $17.3\pm0.1$ \\
\newmodel~$6$M & $\mathbf{86.1\pm0.7}$ & $\mathbf{17.5\pm0.1}$ \\
\hline
\end{tabular}
\label{tab:live-dp}
\end{table}

\end{document}